\definecolor{colorABCOptimal}{rgb}{0.9, 1.0, 0.9} 
\definecolor{colorABOptimal}{rgb}{0.9, 0.9, 1.0} 
\definecolor{colorACOptimal}{rgb}{1.0, 0.9, 0.9} 
\DeclarePairedDelimiter{\abs}{\lvert}{\rvert}
\DeclarePairedDelimiter{\paren}{(}{)}
\DeclareMathOperator*{\E}{\mathbb{E}}
\let\Pr\relax 
\DeclareMathOperator*{\Pr}{\mathbb{P}}
\providecommand{\abs}[1]{\lvert#1\rvert}
\newcommand{\cD}{\mathcal{D}}
\newcommand{\cM}{\mathcal{M}}
\newcommand{\mat}[1]{{\mathbf #1}}
\def\Nset{\mathbb{N}}
\def\Rset{\mathbb{R}}
\renewcommand{\k}{\mat{k}}
\theoremstyle{plain}
\newtheorem{theorem}{Theorem}[section]
\newtheorem{lemma}[theorem]{Lemma}
\newtheorem{corollary}[theorem]{Corollary}
\theoremstyle{definition}
\newtheorem{definition}[theorem]{Definition}
\theoremstyle{remark}
\begin{document}

%

%

\twocolumn[

\aistatstitle{Fast Inference via Hierarchical Speculative Decoding}

\aistatsauthor{ 
    Clara Mohri\textsuperscript{1,2} \And
    Haim Kaplan\textsuperscript{2,3} \And
    Tal Schuster\textsuperscript{4} \And
    Yishay Mansour\textsuperscript{2,3} \And
    Amir Globerson\textsuperscript{2,3}
}

\aistatsaddress{
    \textsuperscript{1}Harvard University \And
    \textsuperscript{2}Google Research \And
    \textsuperscript{3}Tel Aviv University \And
    \textsuperscript{4}Google DeepMind
} ]

\begin{abstract}
  Transformer language models generate text autoregressively, making inference latency proportional to the number of tokens generated. Speculative decoding reduces this latency without sacrificing output quality, by leveraging a small draft model to propose tokens that the larger target model verifies in parallel. In practice, however, there may exist a set of potential draft models—ranging from faster but less inaccurate, to slower yet more reliable. We introduce Hierarchical Speculative Decoding (HSD), an algorithm that stacks these draft models into a hierarchy, where each model proposes tokens, and the next larger model verifies them in a single forward pass, until finally the target model  verifies tokens. We derive an expression for the expected latency of any such hierarchy and show that selecting the latency-optimal hierarchy can be done in polynomial time. Empirically, HSD gives up to 1.2× speed-up over the best single-draft baseline, demonstrating the practicality of our algorithm in reducing generation latency beyond previous techniques.
\end{abstract}

\section*{Introduction}
Most language models are trained with teacher-forcing to predict the next token in an autoregressive fashion. While this allows for a highly parallelizable training process, inference remains a sequential process: a model must finish a full forward pass and predict a token before it can start processing the new context to predict the following token. This sequential process typically does not fully utilize the compute capabilities of modern accelerators, making text generation slow and costly.

Speculative decoding \citep{leviathan2023fast,chen2023accelerating} addresses this limitation by leveraging a smaller drafter model that autoregressively generates multiple tokens ahead. Then, these tokens are verified, and possibly discarded, by the larger target model in parallel with a single forward pass. By following the speculative sampling rejection rule, the output distribution of verified tokens is identical to that of the large model. Every round of drafting and verification yields at least one verified token in the worst case, and one more token than the number of drafted tokens in the best case.

Notably, there is a natural tradeoff in selecting the drafter for speculative decoding---a larger drafter will improve token acceptance rate but increase drafting latency. Many recent studies have investigated approaches for pushing the Pareto frontier of drafters~\citep{liu2023online, xiao2024parallelspec, zhang2024draftverifylossless, miao2024specinfer, hooper2023speed, zhou2023distillspec}. However, ultimately the practitioner may select the single drafter that provides the best accuracy-cost ratio. For example, when early exits from the target model are considered as drafters~\citep{elhoushi2024layer,NEURIPS2023_7b97adea,zhang2024draftverifylossless}, the layer that gives the best accuracy vs.\ depth tradeoff will be used.



In this paper, we question the paradigm of using only a single drafter. We study the prospect of leveraging multiple drafters in a cost-effective way. To this end, we introduce the Hierarchical Speculative Decoding (HSD) algorithm. In HSD, each drafter validates sequences generated by lower drafters in the hierarchy, and only the smallest drafter (i.e., lowest in hierarchy) generates autoregressively. We prove that using multiple drafters can further reduce latency while preserving the quality of the output.



Next, we turn to the question of finding the hierarchy which results in the optimal latency. A key challenge is that the number of possible hierarchies grows exponentially with the number of drafters available, and therefore naive enumeration would be costly. Furthermore, our algorithm has tunable parameters which should also be optimized. To address this, we derive an expression for the expected latency incurred by a given hierarchy and its parameters, and show that this expression can in fact be optimized in polynomial time. This is done via a reduction to the Generalized Shortest Path problem \citep{oldham2001combinatorial}, which admits a polynomial-time solution.



We validate the effectiveness of HSD empirically by implementing it on top of public open-source Large Language Models. Compared to both vanilla autoregressive decoding and to a single-drafter speculative decoding baseline, our method shows significant speedup gains. Hence, our main contributions are as follows:
\begin{enumerate}
    \item \textbf{Theoretical}: We introduce the Hierarchical Speculative Decoding algorithm for accelerating inference in LLMs and analyze its latency in Section~\ref{sec:hsd}. In Section~\ref{sec:optimization}, we formulate its corresponding optimization problem, and provide an efficient solution for optimal hierarchy construction.
    \item \textbf{Empirical}: In Section~\ref{sec:experiments}, we evaluate our method on open-source language models, and demonstrate speed up over speculative decoding with a single draft model.
\end{enumerate}


\section*{Related Work}
\paragraph{Speculative decoding.} 
We build on the speculative decoding method~\citep{chen2023accelerating,leviathan2023fast} for accelerating transformers. In this framework, an efficient draft model generates tokens autoregressively, which are then verified in parallel by a target model using a sampling method that guarantees an identical output distribution to the target model. While some follow up work suggests new verification algorithms~\citep{liu2024parallel,narasimhan2025faster,sun2025block}, the vast majority of studies focus on improving the performance of drafters via techniques such as distillation~\citep{zhou2023distillspec}, enhanced attention to past verifier predictions~\citep{tandem}, multi-token prediction heads~\citep{cai2024medusa,mtp,eagle}, and other self-speculation solutions~\citep{zhang2024draftverifylossless} that further leverage signals from the target model. \citet{elhoushi2024layer} train a target model with an auxilliary early-exit loss \citep{Elbayad2020Depth,schuster2022confident} in order to obtain draft tokens from a post-hoc selected earlier layer in the target model.

Our work is complementary to most previous advancements, and presents a departure from the single drafter paradigm by replacing it with a hierarchy of drafters with increasing cost and accuracy.  
Perhaps most relevant is the work of \citet{sun2024triforce} that proposes a tailored two-stage hierarchy drafting method that leverages memory bottlenecks in certain deployment setups. In contrast, we introduce a general hierarchy framework with any set of appropriate drafter candidates, and develop an optimization solution for constructing the optimal hierarchy.

\paragraph{Hierarchical models.} Other uses of model hierarchies, ordered from weakest and cheapest to most capable and expensive, have demonstrated promising potential. One related domain is cascade models~\citep{deng2020cascaded, dohan2022language, gupta2024language, narasimhan2024faster} where typically the decision whether to use a larger model is based on a confidence measure over the prediction of the smaller model. Early exits in language models~\citep{bae2024relaxed,Elbayad2020Depth,schuster2022confident} can be viewed as a form of cascades that are nested within a single model. \citep{narasimhan2024faster} use a speculative decoding technique to perform deferral in a two-model cascade.
While these methods can provide promising speedups with quality guarantees in expectation, in contrast to speculative decoding, they do not give a per-example guarantee on the output distribution.

\section{Background}
\label{sec:background}
\begin{figure*}[t]
\label{fig:alg}
\centering\includegraphics[width=350pt]{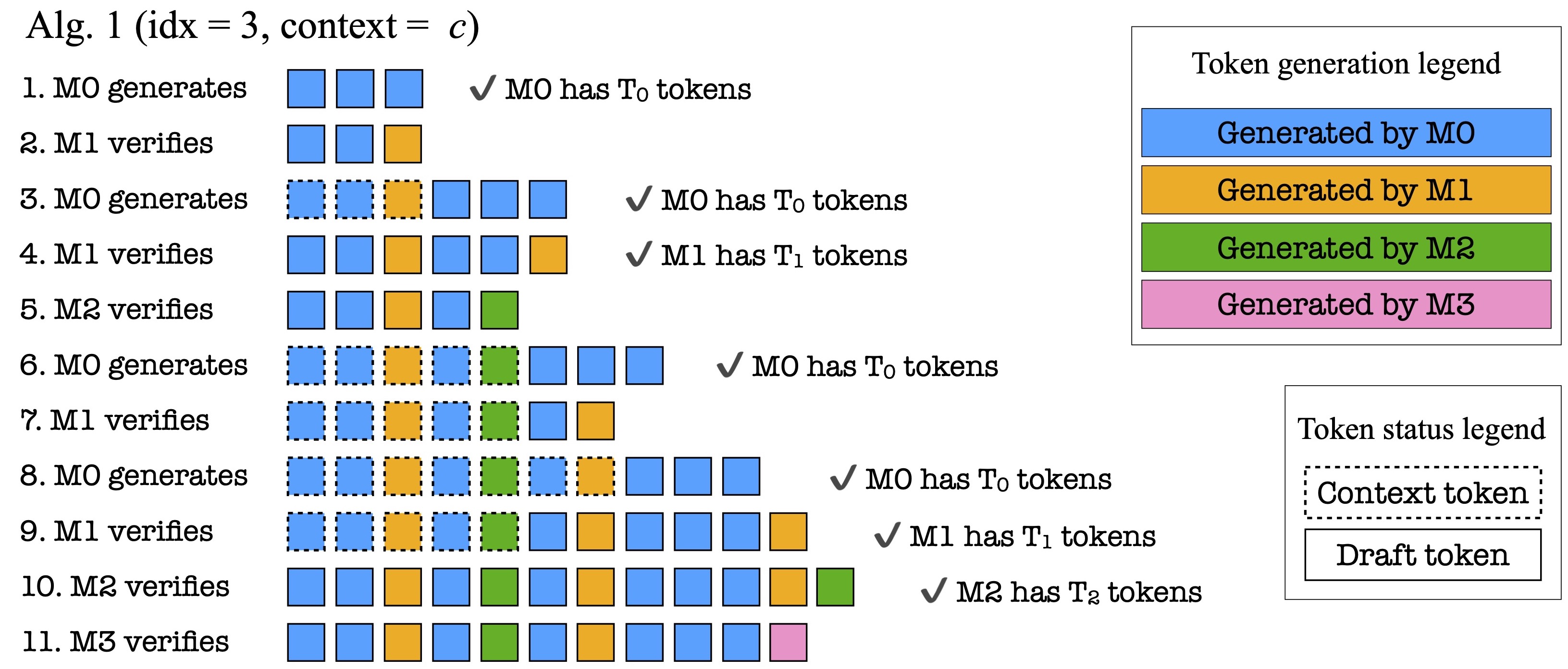} 
    \caption{\looseness=-1 An example stack trace of HSD for $T_0 = 3, T_1 = 6, T_2 = 12$. The color of a token represents which model generated that token. A token can be generated either auto-regressively by the base model $\cM_0$, or by the verification rule which can either replace a token or generates an additional token when all draft tokens are accepted. A token is considered to be part of the context for a certain model if a model above it accepted this token.  }
    \label{fig:wide}
\end{figure*}
We begin with a brief overview of speculative decoding. Given two language models $\cM_q, \cM_p$, the goal is to perform speculative decoding where $\cM_q$ is a small draft model and $\cM_p$ is a large target model. These may be arbitrary models, provided they share the same vocabulary $\mathcal{V}$.

\looseness=-1 For a context $c$, $\cM_q$ has an output distribution over next tokens which is $q_{c} \in [0, 1]^{\abs{\mathcal{V}}}$. That is, $\Pr_{x \sim q_{c}}[x = x_t]$ is the probability that, given context $c$, $\cM_q$ outputs $x_t \in V$ as the next token. Similarly, $\cM_p$ has an output distribution over next tokens which is $p_{c} \in [0, 1]^{\abs{V}}$. Note the output distribution is a function of the context $c$.

The algorithm for speculative sampling is as follows: given $c$ as the context, first sample $x_t \sim q_{c}$. If $q_{c}(x_t) \leq p_{c}(x_t)$, then accept $x_t$. If $q_{c}(x_t) > p_{c}(x_t)$, then accept $x_t$ with probability $p_{c}(x_t)/q_{c}(x_t)$. Otherwise, with probability $1 - p_{c}(x_t)/q_{c}(x_t)$, reject $x_t$ and sample from the distribution defined as follows:
\[
p'_{c}(x) = \frac{\max\{0,  p_{c}(x) - q_{c}(x) \}}{\sum_{x' \in V}\max\{0,  p_{c}(x') - q_{c}(x') \} } \ \forall x.
\]
This rejection sampling technique guarantees that the law of accepted tokens is the same as $p_c$.
The acceptance rate $\alpha_c$ is the the probability that $x_t \sim q_c$ is rejected in the algorithm. It can be derived analytically as follows:
\[
\alpha_c = 1 - \sum_x \abs*{\frac{p_c(x) - q_c(x)}{2}}.
\]

We refer the reader to \citet{leviathan2023fast} for further details and proofs. We also make use of the expected acceptance rate over the input distribution $\cD$, $\alpha = \E_{c \sim \cD}[\alpha_c].$ As with previous literature \citep{leviathan2023fast} we assume for our theoretical purposes that acceptances occur in an independent and identically distributed fashion. Although this is not necessarily the case in practice, we empirically validate that this assumption is not too strong in Section~\ref{sec:experiments} as well as in Appendix~\ref{app:b}.

\section{Hierarchical Speculative Decoding}
\label{sec:hsd}
Next, we introduce the Hierarchical Speculative Decoding algorithm. The algorithm leverages several  draft models in order to generate samples from a target model, and can improve upon the latency of using a single draft model.
A key idea in our framework is that models in the hierarchy serve as both drafters and verifiers. Given there are many ways in which one could use a set of models within a hierarchical framework, we begin in Section \ref{subsec:desidarata} with the desiderata which motivate the algorithm. We introduce the algorithm in Section \ref{subsec:hsd_alg} and analyze its latency in Section \ref{subsec:latency_analysis}.  

\subsection{Motivation}
\label{subsec:desidarata}

 We begin by motivating the design of our algorithm. A desired property of an algorithm for speculative decoding with many models is that as many tokens as possible should be processed in parallel. Speculative decoding is successful due to the verifier's ability to verify at least one token in parallel, at a cost similar to generating a single token. This is also efficient in terms of hardware: because there is a significant overhead to loading in the weights of a model onto a device, it is desirable to make the most use of this operation as possible. By parallelizing verification, the same models acts on different tokens simultaneously. As such, we aim to leverage this principle.

Second, we would like for only the smallest model to perform autoregressive generation. This is in order to minimize the initial cost of generating a draft token throughout the algorithm.

Lastly, the algorithm should be principled in the following manner: there should exist configurations in which adding more models to the hierarchy improves the latency from the target model.

We design an algorithm grounded in these desiderata. In a hierarchy of models, we only allow the smallest model to generate tokens autoregressively. After this, all models until the target model function as both drafters and verifiers for the next model.
Prior to verification, we ensure that there are a fixed amount of tokens to be verified in order to maximize parallelism. When a rejection occurs, we supply the verifying model with more draft tokens, rather than allowing it to generate any further tokens on its own or simply passing the remaining tokens to a subsequent model. 

\subsection{Preliminaries}
\begin{algorithm*}[t]
\footnotesize
\caption{Hierarchical Speculative Decoding (HSD)}
\label{alg:seq_generation}
\begin{algorithmic}[1]
\Procedure{HSD}{$\text{idx}, \text{context}$}
    \State \textbf{Input:} Model index $\text{idx} \in [0, K]$, token sequence $\text{context}$
    \State \textbf{Given:} Models $\{\mathcal{M}_0, \ldots, \mathcal{M}_{K}\}$, $T$ values $\{T_0, \ldots, T_{K-1}\}$
    \State $\text{tokens} \gets [\,]$, $\text{probs} \gets [\,]$, $\text{count} \gets 0$

    \If{$\text{idx} = 0$} \Comment{Base case: Smallest model generates autoregressively}
        \While{$\text{count} < T_0$}
            \State $\text{new\_token}, \text{new\_prob} \gets \mathcal{M}_0(\text{context} \mathbin{+} \text{tokens})$
            \State \Call{Append}{$\text{tokens}, \text{new\_token}$}
            \State \Call{Append}{$\text{probs}, \text{new\_prob}$}
            \State $\text{count} \gets \text{count} + 1$
        \EndWhile

    \ElsIf{$\text{idx} = K$} \Comment{Top case: Target model verifies drafts from below}
        \State $\text{draft\_tokens}, \text{draft\_probs} \gets \Call{HSD}{\text{idx} - 1, \text{context}}$
        \State $\text{tokens}, \text{probs} \gets \Call{Verify}{\text{idx}, \text{draft\_tokens}, \text{draft\_probs}, \text{context}}$

    \Else \Comment{Recursive step: Intermediate models verify and extend}
        \While{$\text{count} < T_{\text{idx}}$}
            \State $\text{draft\_tokens}, \text{draft\_probs} \gets \Call{HSD}{\text{idx} - 1, \text{context} \mathbin{+} \text{tokens}}$
            \State $\text{verified\_tokens}, \text{verified\_probs} \gets \Call{Verify}{\text{idx}, \text{draft\_tokens}, \text{draft\_probs}, \text{context} \mathbin{+} \text{tokens}}$
            \State \Call{Extend}{$\text{tokens}, \text{verified\_tokens}$}
            \State \Call{Extend}{$\text{probs}, \text{verified\_probs}$}
            \State $\text{count} \gets \text{count} + \Call{Len}{\text{verified\_tokens}}$
        \EndWhile
    \EndIf
    
    \State \textbf{return} $\text{tokens}, \text{probs}$
\EndProcedure
\end{algorithmic}
\end{algorithm*}
We are given language models $\cM_0, \cM_1, \ldots, \cM_K$, where $\cM_K$ is the target model. All models share the same vocabulary $\mathcal{V}$, but are otherwise arbitrary. For example, models could be early-exits at different stages from the same model \citep{schuster2022confident}. Each model $\cM_i$ has an inference cost  $c_i > 0$ which, for our purpose, is the time to complete a forward pass. The verification cost is similar to the token generation cost. We also assume that the cost of verifying a batch of tokens is the same as one token generation. The acceptance rate between $\cM_i$ and $\cM_j$ is $\alpha_{i, j} \in [0, 1]$, as defined in Section \ref{sec:background}. $\cM_i$ takes as input a context $c \in \{\mathcal{V}\}^L$, where $L > 0$ is the context length. It outputs a tuple $(t, p)$ where $p \in [0, 1]^{\abs{V}}$ is the distribution over the next token and $t \sim p$.


\subsection{Main algorithm}

\label{subsec:hsd_alg}

We introduce HSD in Algorithm \ref{alg:seq_generation}, a recursive procedure in which each model in the hierarchy requests draft tokens from the model below it. Upon receiving these draft tokens, verification is performed. Every model, except the final target model, maintains a buffer of verified tokens that must reach a specified size before returning tokens upstream. Figure~\ref{fig:alg} illustrates an example stack trace.

To generate tokens from the target model $\cM_{K}$, the process begins with the a call to HSD with the initial context and the model index set to $K$. The recursion reaches the base case when the smallest model, $\cM_0$, generates tokens autoregressively.  $\cM_0$ generates $T_0$ tokens sequentially, which it passes to model $\cM_1$ for verification. $\cM_1$ verifies these tokens, and if fewer than $T_1$ tokens have been accepted, $\cM_0$ continues generating batches of $T_0$ tokens for verification by $\cM_1$.  The verification procedure is detailed in Appendix~\ref{app:b}. 



Pseudo-code for the verification function is provided in Appendix~\ref{app:b}, and is the same as that of \citet{leviathan2023fast}. Throughout, we use `$+$' to denote the concatenation of token sequences.

We state the correctness of HSD. The proof is given in Appendix~\ref{app:a}.
\begin{theorem}[Correctness of HSD]
\label{thm:alg_correctness}
    For any set of models $\cM_0, \ldots, \cM_K$, where $\cM_K$ is the target model and any parameters, the output distribution of Algorithm~\ref{alg:seq_generation} follows that of target model $\cM_K$.
\end{theorem}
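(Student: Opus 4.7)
The plan is to prove correctness by induction on the model index $i$, showing that for every $i \in \{0, 1, \ldots, K\}$ and every context $c$, the token sequence returned by \textsc{HSD}$(i, c)$ is distributed identically to the sequence that $\cM_i$ would produce autoregressively starting from $c$, and that the accompanying $\text{probs}$ are exactly the conditional distributions of $\cM_i$ at each step. The theorem then follows by specializing to $i = K$.

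For the base case $i = 0$, the algorithm simply samples $T_0$ tokens autoregressively from $\cM_0$ and records $\cM_0$'s conditional distributions, so the claim holds by definition. For the inductive step, assume the claim at level $i-1$ and consider a single iteration of the while loop at level $i$. The recursive call $\textsc{HSD}(i-1, \text{context} + \text{tokens})$ returns draft tokens whose joint law equals that of $\cM_{i-1}$ given the current context, together with the matching $\cM_{i-1}$-probabilities (by the inductive hypothesis). The \textsc{Verify} routine is identical to the speculative sampling rule of \citet{leviathan2023fast}, applied with drafter distribution $q = \cM_{i-1}(\cdot \mid \text{context} + \text{tokens} + \text{prefix so far})$ and verifier distribution $p = \cM_i(\cdot \mid \text{context} + \text{tokens} + \text{prefix so far})$, where the verifier distributions are obtained in a single forward pass of $\cM_i$. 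Their correctness theorem states that, regardless of $q$, every token emitted by this rule is distributed exactly as $p$. Therefore each token added by \textsc{Verify} in this iteration is a sample from $\cM_i$'s conditional on the context-so-far, and the probabilities returned alongside it are $\cM_i$'s conditionals, as required for the next level to use them as drafter probabilities.

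The remaining subtlety is the loop: because \textsc{Verify} may return fewer than $T_i$ tokens (though always at least one), the loop iterates with an updated context. The inductive hypothesis and the Leviathan--Matias guarantee apply afresh at each iteration, conditionally on all previously accepted tokens. Concatenating these batches and invoking the autoregressive factorization of $\cM_i$'s distribution shows that the full sequence of (at least) $T_i$ accepted tokens is jointly distributed as $\cM_i$ given $c$. The case $\text{idx} = K$ is identical but with a single verification rather than a loop.

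The main obstacle is not any deep technical step but careful bookkeeping of two intertwined invariants at every recursion level: (i) the \emph{tokens} passed upward are a sample from the level-$i$ model's autoregressive distribution, and (ii) the accompanying \emph{probs} are precisely that same model's conditional distributions, so that the next level's invocation of \textsc{Verify} operates on a genuine drafter--verifier pair and the single-drafter correctness result of \citet{leviathan2023fast} is directly applicable. Once both invariants are maintained through the base case, the recursive case, and the top case, the theorem follows.
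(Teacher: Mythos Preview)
Your proposal is correct and follows essentially the same strategy as the paper: an induction on the model index, invoking the single-drafter speculative-sampling correctness of \citet{leviathan2023fast} at the inductive step so that level $i$'s output is distributed as $\cM_i$. The only cosmetic differences are that the paper takes $\text{idx}=1$ as the base case (absorbing your $i=0$ step into it) and is terser about the while-loop bookkeeping and the two invariants you spell out; if anything, your version is more explicit than the paper's on these points.
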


\subsection{Latency analysis}
\label{subsec:latency_analysis}
In this section, we derive an expression for the latency when all models are included. In order to analyze the latency theoretically, we assume that acceptances occur in an IID fashion, as already stated. This also implies that the number of tokens generated per recursive round is  IID. Empirical results in Section \ref{sec:experiments} show that it is not an unreasonable assumption due to the alignment of the derived expected latency and the true latency.

We define the function $\gamma: [0, 1] \times \Nset \times \Nset \rightarrow \Rset$, which counts the expected number of rounds of drafting and verification between a given pair of models. In particular, if $\cM_j$ requires $T_j$ tokens but receives $T_i$ tokens at a time from $\cM_i$, then $\gamma(\alpha_{i, j}, T_{i}, T_j)$ is the expected number of draft and verification rounds. Each one of these rounds results in a recursive call querying $\cM_i$ for more tokens. In practice, we estimate the value of $\gamma$ empirically. \footnote{Since we receive the tokens in multiples of $T_j$, we may collect more than $T_i$ token. This makes it difficult to give an exact formula for $\gamma$.}

\begin{theorem}
\label{thm:latency1}
For a set of models $\{\cM_i\}_{i \in [K]}$ where the pairwise acceptance rates are $\alpha_{i, j}$ for all $i, j \in [K]$, and parameters  $T = \{T_0, \ldots, T_{K-1}\}$, the expected latency per token of HSD is:
\[ 
\sum_{i = 0}^K c_i \prod_{j = i}^K R(\alpha_{j-1, j}, j),
\]    
where $R: [0,1] \times [K] \rightarrow \Rset$ is defined as:
\[
R(\alpha, n) = \begin{cases} 
\paren*{1 - \alpha}/\paren*{1 - \alpha^{T_{K-1}+1}} &\text{if }  n = K \\
\gamma(\alpha,  T_{n-1}, T_n) &\text{if } 1 \leq n < K \\
T_0 & \text{if } n = 0.
\end{cases}
\]
\end{theorem}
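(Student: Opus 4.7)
The plan is to express the expected latency per output token as a ratio: (expected total cost incurred by one top-level invocation $\text{HSD}(K,\cdot)$) divided by (expected number of output tokens that invocation produces). Under the IID acceptance assumption inherited from Section~\ref{sec:background}, linearity of expectation applies at every recursion level, so counts at one level multiply cleanly with counts at the next. In particular, the $\gamma$ values introduced just before the theorem equal, by definition, the expected number of inner draft-verify rounds per outer invocation at the corresponding level of the recursion.

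For the numerator I would let $N_n$ denote the expected number of invocations of $\text{HSD}(n,\cdot)$ per top-level call. Reading off the pseudocode: $N_K = 1$, the top case makes exactly one sub-call so $N_{K-1} = 1$, and for $1 \le n < K$ one invocation of $\text{HSD}(n,\cdot)$ in expectation triggers $\gamma(\alpha_{n-1,n}, T_{n-1}, T_n)$ sub-calls to $\text{HSD}(n-1,\cdot)$, giving the recursion $N_{n-1} = N_n \cdot \gamma(\alpha_{n-1,n}, T_{n-1}, T_n)$. Unrolling yields $N_{i-1} = \prod_{j=i}^{K-1} R(\alpha_{j-1,j}, j)$. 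The per-call forward-pass counts then fall out: $\cM_K$ runs once; $\cM_n$ with $1 \le n < K$ runs $\gamma(\alpha_{n-1,n}, T_{n-1}, T_n)$ times within each of its $N_n$ invocations, for $N_{n-1}$ forward passes total; and $\cM_0$ runs $T_0$ times within each of its $N_0$ invocations. Summing yields an expected cost per top-level call of $c_K + \sum_{n=1}^{K-1} c_n N_{n-1} + c_0 T_0 N_0$.

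For the denominator, the top-level call performs exactly one round of standard speculative sampling with $T_{K-1}$ drafts; by the formula of \citet{leviathan2023fast}, the expected number of accepted output tokens is $(1-\alpha_{K-1,K}^{T_{K-1}+1})/(1-\alpha_{K-1,K})$, whose reciprocal is exactly $R(\alpha_{K-1,K}, K)$. Multiplying the per-call cost by this reciprocal and using the identities $N_{i-1} \cdot R(\alpha_{K-1,K}, K) = \prod_{j=i}^K R(\alpha_{j-1,j}, j)$ and $T_0 = R(\cdot, 0)$ regroups the three pieces into the claimed $\sum_{i=0}^K c_i \prod_{j=i}^K R(\alpha_{j-1,j}, j)$.

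The main obstacle I anticipate is the multiplicativity step across levels. Because $\text{HSD}(n-1,\cdot)$ returns tokens in fixed batches of size $T_{n-1}$ that may overshoot the remaining slack up to $T_n$, the number of sub-calls and the number of tokens passed upward are coupled random variables, and the number of rounds inside $\text{HSD}(n,\cdot)$ depends on the realized acceptance pattern both at level $n$ and implicitly (through batch lengths) at level $n-1$. A careful Wald-style argument, enabled precisely by the IID acceptance assumption emphasized in Section~\ref{sec:background}, is needed to justify that $\mathbb{E}[N_{n-1}] = \mathbb{E}[N_n] \cdot \mathbb{E}[\gamma(\alpha_{n-1,n}, T_{n-1}, T_n)]$ and hence that the expected counts factor into the clean product form appearing in the theorem.
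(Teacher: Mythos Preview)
Your proposal is correct and arrives at the formula via essentially the same multiplicative structure the paper exploits, but the decomposition is organized differently. The paper argues by bottom-up induction on the expected cost $C_k$ of one invocation of $\text{HSD}(k,\cdot)$: starting from $C_0 = T_0 c_0$, it shows $C_{k+1} = R(\alpha_{k,k+1},k+1)\,(C_k + c_{k+1})$, unrolls to $C_k = \sum_{i=0}^k c_i \prod_{j=i}^k R(\alpha_{j-1,j},j)$, and then at level $K$ divides by the expected number of accepted tokens. You instead work top-down, tracking the expected number of invocations $N_n$ at each level and summing the per-model forward-pass costs directly. These are dual accountings of the same recursion tree: the paper aggregates costs inside each sub-call, you count how many times each sub-call occurs. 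Both rely on exactly the same step---that expectations factor across levels under the IID acceptance assumption---and both gloss over the overshoot issue in the same way, by folding it into the empirically-estimated $\gamma$. Your explicit mention of a Wald-type justification for the factorization $\E[N_{n-1}] = \E[N_n]\cdot\gamma(\cdot)$ is, if anything, slightly more careful than the paper's inductive step, which silently uses the same property when multiplying the expected round count by the expected per-round cost.
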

\begin{proof}
We give a proof by induction over the value of $idx$ given to Algorithm \ref{alg:seq_generation}. In the base case, $idx = 0$. The cost of the algorithm in this case is simply $T_0c_0$. 
The inductive hypothesis states that for all $k < K-1$,  the cost of Algorithm \ref{alg:seq_generation} with $idx = k$ is $\sum_{i = 0}^k c_i \prod_{j = 1}^k R(\alpha_{j-1, j}, j)$. Consider now the case where $idx = k+1$. According to the function description, while $T_{k+1}$ tokens have not been accepted, further tokens will be requested from $\cM_{k}$ via function calls of the algorithm with $idx = k$. The expected number of such rounds is $\gamma(\alpha_{k, k+1}, T_k, T_{k+1})$. By the inductive hypothesis, in expectation, each of these rounds takes time $\sum_{i = 0}^k c_i \prod_{j = i}^k R(\alpha_{j-1, j}, j) + c_{k+1}$. The additional cost of $c_{k+1}$ is incurred due to verification. The expected cost at $idx = k+1$ is thus:
\begin{align*}
& R(\alpha_{k, k+1}, k+1) \paren*{\sum_{i = 0}^k c_i \prod_{j = i}^k R(\alpha_{j-1, j}, j) + c_{k+1}} \\
&= \sum_{i = 0}^{k+1} c_i \prod_{j = i}^{k+1} R(\alpha_{j-1, j}, j).
\end{align*}
Hence, the inductive hypothesis holds for all $k < K$. If $idx = K$, we must instead divide by the expected number of generated tokens in order to obtain the latency. This is because $\cM_K$ verifies all tokens from $\cM_{K-1}$ and outputs those which it accepts. As shown in \citet{leviathan2023fast}, the expected number of tokens generated from $\cM_{K}$ is $(1 - \alpha_{K-1, K}^{T_{K-1}+1})/(1 - \alpha_{K-1, K})$.
\end{proof}
\subsection{Motivating example}
Having analyzed the expected latency of HSD, we return to the question: does there exist a configuration of models such that increasing the number of models included in the hierarchy decreases the latency from the target model? We answer this question in the affirmative with an example configuration in Table~\ref{tab:expected_latency_comparison}, and provide details of the configuration used in the Appendix~\ref{app:b}. As this is only one example, we note that it is likely there exist configurations which exhibit even greater speedup from including more models. 


\begin{table}[t]
\centering
\begin{tabular}{ccc}
\toprule
\textbf{\makecell{Number of\\Models}}&\textbf{\makecell{Expected\\Speedup}}&\textbf{\makecell{Expected\\Latency}}\\
\midrule
1&1.0000×&33.00\\
2&2.2971×&14.37\\
3&3.0211×&10.89\\
4&3.0620×&10.64\\
5&3.0829×&10.63\\
6&3.0839×&10.61\\
\bottomrule
\end{tabular}
\caption{An example of the expected speedup as the number of models provided to HSD increases.}
\label{tab:expected_latency_comparison}
\end{table}

\section{Efficient Optimization of Hierarchies}
\label{sec:optimization}
The HSD algorithm is specified by a set of models and parameters. Thus, given a set of $K$ potential draft models from which to choose, there are $O(2^K)$ possible sets of models. A question which arises is, \textit{how do we find the hierarchy with the best latency?} Including all models might not necessarily be the optimal solution: perhaps there is a model which suffers a poor acceptance rate to the subsequent model, or perhaps two models are somewhat redundant. The problem becomes even more challenging when the objective is also to identify the optimal $T$ parameters. 


Finding the optimal hierarchy naturally requires having an estimate of the latency corresponding to each hierarchy. While this could be obtained via simulation, it would be costly and inefficient. In Section~\ref{sec:subset_latency}, we provide the latency analysis for a subset of models. In Section~\ref{subsec:reduction}, we show that, after selecting a maximum value for any parameter in $T$, the optimization can in fact be solved in polynomial time.


\subsection{Latency of a subset of models \label{sec:subset_latency}}
We present Corollary \ref{cor:best_subset}, a natural extension of Theorem \ref{thm:latency1} that is useful for discussing the latency of a subset of models, rather than the entire set of models. The proof follows from that of Theorem~\ref{thm:latency1}.
\begin{corollary}
\label{cor:best_subset}
Given models $\{\cM_i\}_{i=0}^K$, an ordered subset $\sigma \subseteq [K]$ of model indices with $\abs{\sigma} \geq 2$ and final element $K$, and parameters $T = \{T_0, \ldots, T_{\abs{\sigma}-1}\}$, the expected latency of HSD using  $\{\cM_i\}_{i \in \sigma}$ is: 
\begin{align*}
L(\sigma, T) = \sum_{i = 0}^{\abs{\sigma}} c_{\sigma[i]} \prod_{j = i}^{\abs{\sigma}} R_{\sigma,T}(\alpha_{\sigma[j-1], \sigma[j]}, j), 
\end{align*}
where $R_{\sigma, T}: [0, 1] \times [\abs{\sigma}] \rightarrow \Rset$ is defined as:\vspace{-3pt}
\begin{equation*}
R_{\sigma, T}(\alpha, n) = \begin{cases} 
\paren*{1 - \alpha} /\paren*{1 - \alpha^{T_{\abs{\sigma}-1}+1}} &\text{if }  n = \abs{\sigma} \\ 
\gamma(\alpha, T_{n-1},T_n) &\text{if } 1 \leq n < \abs{\sigma} \\
T_0 & \text{if } n = 0.
\end{cases}
\end{equation*}
\end{corollary}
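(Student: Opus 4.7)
The plan is to reduce Corollary~\ref{cor:best_subset} directly to Theorem~\ref{thm:latency1} via a relabeling of the hierarchy, rather than repeating the induction. The key observation is that the proof of Theorem~\ref{thm:latency1} only invokes the structure of the hierarchy through (i) the per-model costs $c_j$, (ii) the pairwise acceptance rates $\alpha_{j-1,j}$ between consecutive models in the sequence, (iii) the $T$-parameters, and (iv) the stipulation that the last model in the sequence is the target. It does not use any other property of the global indexing $0,\ldots,K$. Consequently the same induction applies verbatim to any ordered sequence of models whose final element is the target.

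Concretely, I would introduce a relabeled hierarchy $\cM'_0,\cM'_1,\ldots,\cM'_{\abs{\sigma}-1}$ with $\cM'_i := \cM_{\sigma[i]}$, so that $\cM'_{\abs{\sigma}-1} = \cM_K$ plays the role of the target. Define the induced costs $c'_i := c_{\sigma[i]}$ and induced consecutive acceptance rates $\alpha'_{i-1,i} := \alpha_{\sigma[i-1],\sigma[i]}$. Because Algorithm~\ref{alg:seq_generation} only dispatches calls between consecutively-indexed models in whatever sequence it is supplied, executing HSD on the subset $\{\cM_i\}_{i\in\sigma}$ with parameters $T$ is operationally identical to executing HSD on the relabeled sequence $\{\cM'_i\}_{i=0}^{\abs{\sigma}-1}$ with the same parameters. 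Hence the two runs have identical expected latency.

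Applying Theorem~\ref{thm:latency1} to the relabeled hierarchy, with $\abs{\sigma}-1$ playing the role of $K$, yields
\[
\sum_{i=0}^{\abs{\sigma}-1} c'_i \prod_{j=i}^{\abs{\sigma}-1} R'(\alpha'_{j-1,j},\, j),
\]
where $R'$ is the function of Theorem~\ref{thm:latency1} with $K$ replaced by $\abs{\sigma}-1$. Unpacking the primed notation, $c'_i = c_{\sigma[i]}$ and $\alpha'_{j-1,j} = \alpha_{\sigma[j-1],\sigma[j]}$, and matching the three cases of $R'$ ($n=0$, $1\le n<\abs{\sigma}$, and $n=\abs{\sigma}$) to the three cases defining $R_{\sigma,T}$ in the corollary, produces exactly the expression $L(\sigma,T)$ in the statement.

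The only real work is the indexing bookkeeping: I would verify that in the outer product the pair $\sigma[j-1],\sigma[j]$ indeed refers to consecutive models in the chosen subset, and that the top-level case corresponding to the target inherits the factor $(1-\alpha)/(1-\alpha^{T_{\abs{\sigma}-1}+1})$ that arose in Theorem~\ref{thm:latency1}'s final step from dividing by the expected number of accepted tokens. There is no deeper obstacle, since all of the substantive analysis, including the induction on $idx$, was already carried out in the proof of Theorem~\ref{thm:latency1}; Corollary~\ref{cor:best_subset} is essentially a change-of-variables restatement applied to an arbitrary ordered subset of the available models.
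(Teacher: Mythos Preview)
Your proposal is correct and matches the paper's approach: the paper itself does not give a separate proof but simply remarks that the corollary ``follows from that of Theorem~\ref{thm:latency1},'' and your relabeling argument is exactly the natural way to make that one-line claim precise. There is nothing missing or different in substance.
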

\subsection{Preliminaries}
\begin{definition}[HSD problem]
Given a set of models $\{\cM_0\}_{i = 0}^K$ where $\cM_K$ is the target model, and their pairwise acceptance rates, find the subset and parameters which attain the minimum latency $L^*$:
\[
L^* = \min_{\sigma, T} L(\sigma, T).
\]
\end{definition} 

Assuming a maximum $T$ parameter value, we next show that the HSD problem can be solved via a reduction to the Generalized Shortest Path problem~\citep{oldham2001combinatorial}, which is defined below.
\begin{definition}[Generalized Shortest Path (GSP) Problem]
    Given a directed graph $G = (V, E)$, an edge multiplier $\mu: E \rightarrow \Rset > 0$, an edge cost function $c: E \rightarrow \Rset$, and a source vertex $v \in V$, find the flow function $f: E \rightarrow \Rset \geq 0$ which satisfies:
    \begin{align*}
        \min \quad & \sum_{e \in E} f(e)c(e) \\
        \text{s.t.} \quad & \sum_{(v, w) \in E} f(v, w) - \sum_{(u, v) \in E} \mu(u, v) f(u, v) \\
        & \qquad = \mathbb{I}[v = s], && \forall v \in V \\
        & f(e) \geq 0, && \forall e \in E.
    \end{align*}
\end{definition}
\vspace{-5pt}
GSP describes a problem in which one unit of flow is sent from a designated source vertex. The objective is to find a path which minimizes the cost of sending out this unit of flow, subject to the constraint that the path must be flow-conserving. A key challenge in GSP is that, in addition to edges having a cost $c$, they also have flow multipliers: when flow traverses edge $e$, the flow is multiplied by $\mu(e)$. Given a graph with $n$ vertices and $m$ edges, GSP can be solved in $O(mn^2 \log n)$ time \citep{oldham2001combinatorial}.
We give two definitions to be used in Lemma \ref{lem:oldham}, which motivates our reduction.
\begin{definition}
A \emph{lossy cycle} is a cycle whose product of flow multipliers is strictly less than $1$.
\end{definition}

\begin{definition}
An \emph{augmented path} $s \leadsto v \leadsto w \rightarrow v$\footnote{$s \leadsto v$ denotes some path starting at $s$ and ending at $v$.} is a nonempty path $s \leadsto v \leadsto w$ with an extra edge $w \rightarrow v$ forming
a lossy cycle $v \leadsto w \rightarrow v$. It is a feasible solution to the GSP because the path transports the source’s unit supply to a lossy cycle which “consumes” the flow reaching it.
\end{definition}
\renewcommand{\arraystretch}{1.5}
{
\setlength{\tabcolsep}{2pt} 
\begin{table}[t]
    \centering
    \label{tab:edge_costs} 
    \begin{tabular}{lcc}
        \toprule
        \textbf{Edge $(u, v)$} & \textbf{Multiplier $\mu(u, v)$} & \textbf{Cost $c(u, v)$} \\
        \midrule
        $(\cM_K) , (\cM_i, j)$ &
        $\dfrac{1 - \alpha_{i, K}}{1 - \alpha_{i, K}^{j + 1}}$ &
        $\dfrac{1 - \alpha_{i, K}}{1 - \alpha_{i, K}^{j + 1}} c_K$ \\
        $(\cM_i, j) , (\cM_k, \ell)$ &
        $\gamma(\alpha_{k, i}, \ell, j)$ &
        $\gamma(\alpha_{k, i}, \ell, j)\, c_i$ \\ 
        $(\cM_i, j) , (\cM_i, L)$ &
        $1$ &
        $j\, c_i$ \\ 
        $(\cM_i, L) , (\cM_i, L)$ &
        $\frac{1}{2}$ &
        $0$ \\
        \bottomrule
    \end{tabular}
    \caption{Costs and multipliers for different graph edges.}
    \label{tab:new}
\end{table}
}
\begin{lemma}[\cite{oldham2001combinatorial}]
\label{lem:oldham}
    Solutions to GSP must be augmented paths, or convex combinations of augmented paths with the same cost.
\end{lemma}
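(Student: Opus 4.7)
The plan is to view GSP as a linear program in the edge-flow variables $f(e)$, with equality constraints from flow conservation and inequalities $f(e)\geq 0$, and to invoke the standard fact that a bounded LP attains its minimum at an extreme point of the feasible polyhedron, equivalently at a basic feasible solution. The target is to show that every such extreme point has support equal to an augmented path, after which the convex-combination statement will fall out of the usual face-of-the-polyhedron characterization of optimal sets.

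The first structural step is a rank/support count. The constraint matrix places $+1$ and $-\mu(u,v)$ in the two rows of each edge column, so it has rank at most $|V|$ and any basic feasible solution has support subgraph $H$ with $|E(H)|\leq |V|$. Combined with flow conservation at every $v\neq s$, which reads $\sum_{(v,w)\in H} f(v,w) = \sum_{(u,v)\in H} \mu(u,v)\, f(u,v)$, positivity of multipliers and nonnegativity of flows forces every vertex of $H$ other than $s$ to have both positive in-degree and positive out-degree in $H$; otherwise one side of the conservation equation vanishes and every incident flow must be zero, contradicting inclusion in the support.

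The second step deduces the augmented-path shape. An acyclic $H$ would have a topological sink violating the positive-out-degree condition, so $H$ must contain a directed cycle $C$. The bound $|E(H)|\leq |V|$ (combined with $s$ having no incoming support edges) then forces $H$ to be a single simple path from $s$ to some $v\in C$ together with the cycle $C$ itself. Finally, $C$ must be lossy: if $\prod_{e\in C}\mu(e)\geq 1$, the restricted conservation equations around $C$ either admit no finite positive solution or allow the circulating flow to be scaled freely at zero or negative marginal cost, either contradicting LP boundedness or moving us to a lossier neighboring extreme point.

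The main obstacle I expect is this lossiness step, since one must simultaneously rule out degenerate non-lossy cycles and exclude additional tree branches that might survive the edge-count bound but be killed only by flow conservation. After that, the convex-combination clause is immediate: the optimum of the LP is attained on a face whose vertices are themselves basic feasible solutions (hence augmented paths) achieving the optimal cost, and every point on that face is a convex combination of those vertices.
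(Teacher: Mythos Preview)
The paper does not prove this lemma at all; it is quoted from \citet{oldham2001combinatorial} and used as a black box in the reduction. There is therefore no in-paper argument to compare against, and your sketch already goes well beyond what the paper does.

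Your approach---view GSP as an LP in the edge-flow variables, pass to extreme points, and read the combinatorial shape of a basic feasible solution off the rank bound plus conservation---is exactly the standard one and is essentially how Oldham argues. Two places deserve tightening. First, the rank bound you state is $|E(H)|\le |V|$, but what you actually need is $|E(H)|\le |V(H)|$; this follows because the support columns have nonzero entries only in rows indexed by $V(H)$, and without it you cannot conclude that every support vertex has out-degree exactly one, which is what drives the functional-graph (rho-shape) picture. Second, the assumption that $s$ has no incoming support edges is not part of GSP in general (it happens to hold in the instance the paper constructs); without it you must separately dispose of extra weakly connected components of $H$, which are forced to be gain-one cycles and are then killed by extremality, since circulation on such a cycle is a free direction in the feasible polyhedron. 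Your lossiness paragraph already gestures at this, but the gain-equal-to-one case is handled by the extreme-point argument rather than by unboundedness, and it is worth separating the two. These are precisely the obstacles you flag; they are real but routine to close along the lines you indicate.
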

Without loss of generality, we assume there is a unique augmented path which is  optimal  because an augmented path with equivalent cost can be obtained from a convex combination of such paths. The path which is the solution is determined by all the edges $e$ for which $f(e) > 0$.

\subsection{Reduction from HSD to GSP}
\label{subsec:reduction}
Consider a set of models $\{\cM_i\}_{i = 0}^{K}$, where $\cM_K$ is the target model. Each model $\cM_i$ has cost $c_i$, and the acceptance rates are $\alpha_{i,j}$, $i,j\in [K]$. We set the maximum value for any of the $T$ parameters to be $\overline{T} \in \Nset$. We create a graph $G$ with the following vertices:
\vspace{-8pt}
\begin{enumerate}
    \item $(\cM_K)$,
    \vspace{-2pt}    
    \item $(\cM_i, j) \in \{\cM_i\}_{i = 0}^{K-1} \times \{1, \ldots, \overline{T}\}$,
    \vspace{-2pt}    
    \item  $(\cM_i, L) \in \{\cM_i\}_{i = 0}^{K-1} \times \{L\}$. 
\end{enumerate}
\vspace{-8pt}    
The first vertex, $(\cM_K)$, is the source vertex and corresponds to the target model. The second set of vertices  correspond to the choices of models and parameter to use in the hierarchy.  The third category of vertices are self-looping vertices representing the smallest model in the hierarchy. The graph $G$ has directed edges:
\vspace{-5pt}
\begin{enumerate}
\item  $(\cM_K) \rightarrow (\cM_i, j) \ \ \forall  i, j$,
\vspace{-2pt}  
\item $(\cM_i, j) \rightarrow (\cM_k, \ell) \ \ \forall i > k, j \geq \ell$,
\vspace{-2pt}  
\item $(\cM_i, j) \rightarrow (\cM_i, L) \ \  \forall i, j$,
\vspace{-2pt}  
\item $(\cM_i, L) \rightarrow (\cM_i, L) \ \ \forall i$.
\end{enumerate}
\vspace{-5pt}  
Having defined the  edges, we define $\mu$ and $c$ over these edges as shown in Table~\ref{tab:new}.  We provide an example visualization of the graph reduction in Appendix \ref{app:c}.

\renewcommand{\arraystretch}{1.0}
\begin{table*}[t]
    \centering
    \begin{tabular}{l l c c c c}
    \toprule
        \textbf{Model} & \textbf{Method} & \textbf{$\sigma$} & $T$  & \textbf{Speedup($\uparrow$)} & \textbf{Seconds per Token($\downarrow$)} \\
        \midrule
        \multirow{3}{*}{\shortstack[c]{LayerSkip2-7B \\ (CNN-DM)}} & HSD & $[7, 9, 32]$ & $[2, 5]$ & \textbf{1.76×} & 0.0102 \\ 
        & Baseline & $[8, 32]$ & $[12]$ & 1.62× & 0.0113 \\
        & Target Model & - &  - & 1.00×  & 0.0182\\
        \midrule
        \multirow{3}{*}{\shortstack[c]{LayerSkip2-13B \\ (CNN-DM)}} & HSD & $[7, 18, 40]$ & $[2, 6]$ & \textbf{1.41×} &0.0162 \\ 
        & Single Draft & $[15, 40]$ & $[12]$ & 1.20×  &  0.0190\\
        & Target Model & - & - & 1.00× & 0.0228 \\
        \midrule
        \multirow{3}{*}{\shortstack[c]{LayerSkip2-70B \\ (CNN-DM)}} & HSD & $[7, 23, 80]$ & $[2, 6]$ & \textbf{1.77×} & 0.0410 \\ 
        & Single Draft & $[19, 80]$ & $[5]$ & 1.58× &  0.0456  \\
        & Target Model & - & - & 1.00× & 0.0723 \\
        \midrule
        \multirow{3}{*}{\shortstack[c]{Gemma2-9B \\ (CNN-DM)}} & HSD & $[0, 2, 42]$ & $[1, 1]$ & \textbf{1.06×} & 0.0407 \\ 
        & Single Draft & $[1, 42]$ & $[2]$ & 1.03× & 0.0418 \\
        & Target Model  & - &  - & 1.00×  & 0.0430\\
        \midrule
        \multirow{3}{*}{\shortstack[c]{Gemma2-9B \\ (XSUM)}} & HSD & $[0, 1, 42]$ & $[1, 2]$ & \textbf{1.15×} & 0.0373 \\ 
        & Single Draft & $[1, 42]$ & $[2]$ & 1.08× &  0.0395  \\
        & Target Model & - & - & 1.00× & 0.0429 \\
        \bottomrule \\
    \end{tabular}
    \caption{Results for the LayerSkip models and Gemma2 models. We compare HSD against the single draft baseline and the autoregressive baseline. }
    \label{tab:results}
\end{table*}


\label{thm:reduction_correctness}
\begin{theorem} In the above reduction, a path is a solution to the GSP instance defined above if and only if the corresponding hierarchy is an optimal solution to original HSD problem. 
\end{theorem}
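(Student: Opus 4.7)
The plan is to establish three facts: (i) every feasible solution to the constructed GSP instance is an augmented path of a specific form; (ii) such augmented paths are in bijection with pairs $(\sigma, T)$ consisting of a hierarchy $\sigma$ (with final element $K$) and a parameter vector $T$ whose entries are bounded by $\overline{T}$; and (iii) the GSP cost of any such augmented path equals the HSD latency $L(\sigma, T)$ from Corollary~\ref{cor:best_subset}. Combined with Lemma~\ref{lem:oldham}, which states that optimal GSP solutions are augmented paths (up to convex combinations of equal-cost ones), these three facts yield the theorem in both directions.

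First, I would analyze the topology of $G$. The only directed cycles are the self-loops at the vertices $(\cM_i, L)$, each with multiplier $1/2 < 1$, so these are the only lossy cycles. Type-2 edges strictly decrease the model index; type-1 edges originate only at the source; and type-3 edges are the unique entries into the $L$-layer. Consequently every augmented path must have the form
\begin{equation*}
(\cM_K) \to (\cM_{i_1}, j_1) \to (\cM_{i_2}, j_2) \to \cdots \to (\cM_{i_r}, j_r) \to (\cM_{i_r}, L) \to (\cM_{i_r}, L),
\end{equation*}
with $K > i_1 > i_2 > \cdots > i_r$ and $j_1 \geq j_2 \geq \cdots \geq j_r$, and each $j_k \in \{1, \ldots, \overline{T}\}$. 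Reading this path bottom-up yields the hierarchy $\sigma = (i_r, i_{r-1}, \ldots, i_1, K)$ together with parameters $T = (j_r, j_{r-1}, \ldots, j_1)$; this map is manifestly invertible, giving the bijection in (ii).

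For the cost identity in (iii), I would use flow conservation. Because the self-loop at $(\cM_{i_r}, L)$ has multiplier $1/2$ and cost $0$, flow balance forces the self-loop flow to be twice the incoming flow, but this contributes nothing to the objective; the entire cost is supplied by the initial path. Unit flow out of the source scales by accumulated multipliers, so the flow on edge $e_k$ equals $\prod_{j < k} \mu(e_j)$. From Table~\ref{tab:new}, every non-terminal edge satisfies $c(e_k) = \mu(e_k)\, c_{\mathrm{tail}(e_k)}$, so its contribution to the objective is $c_{\mathrm{tail}(e_k)} \prod_{j \leq k} \mu(e_j)$, while the terminal type-3 edge contributes $j_r c_{i_r} \prod_{j < r} \mu(e_j)$. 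Substituting the explicit multipliers and matching term-by-term with $L(\sigma, T)$: the leading factor $\mu(e_0) = (1-\alpha_{i_1,K})/(1-\alpha_{i_1,K}^{j_1+1})$ aligns with the $n = \abs{\sigma}-1$ case of $R_{\sigma,T}$, each intermediate multiplier $\gamma(\alpha_{i_{k+1}, i_k}, j_{k+1}, j_k)$ matches the $1 \leq n < \abs{\sigma}-1$ case, and the terminal factor $j_r$ supplies the $R_{\sigma,T}(\cdot, 0) = T_0$ base case.

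The main obstacle is step (iii)'s bookkeeping: aligning the graph's top-down ordering $(\cM_K, \cM_{i_1}, \ldots, \cM_{i_r})$ with the hierarchy's bottom-up indexing $(\sigma[0], \ldots, \sigma[\abs{\sigma}-1])$, together with the shifted correspondence between the $j_k$ on the path and the $T_n$ appearing in the latency formula. Once that dictionary is in place, the bijection and the cost identity together imply that the minimum-cost augmented path is precisely the pair $(\sigma^\star, T^\star)$ attaining $L^\star$, so an optimal GSP flow recovers an optimal HSD hierarchy, and conversely every optimal hierarchy arises from an optimal GSP flow.
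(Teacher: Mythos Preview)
Your proposal is correct and follows essentially the same approach as the paper's own proof: establish a bijection between augmented paths in the GSP instance and hierarchy/parameter pairs $(\sigma, T)$, then verify that the GSP cost of each path equals $L(\sigma, T)$ by expanding the accumulated product of multipliers along the path. Your topology analysis in step~(i) and the structural observation $c(e)=\mu(e)\,c_{\mathrm{tail}(e)}$ for non-terminal edges are more explicit than in the paper, but the underlying strategy and logic are the same.
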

The proof relies on showing a bijection between augmented paths in the GSP instance and hierarchies with their parameters in the HSD instance. The bijection shows that the cost of a path in the GSP instance is equal to the latency of the corresponding hierarchy with those parameters in the HSD instance. We defer the proof to Appendix \ref{app:a}.

\subsection{Computational complexity}
\begin{theorem}
HSD can be solved in time $O(\overline{T}^4K^4  \log (\overline{T}K))$.
\end{theorem}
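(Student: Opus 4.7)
The plan is to leverage the reduction from HSD to the Generalized Shortest Path problem established in Section~\ref{subsec:reduction}, together with the known $O(mn^2 \log n)$ running time for GSP cited from \citet{oldham2001combinatorial}. By Theorem~\ref{thm:reduction_correctness}, solving HSD exactly reduces to solving GSP on the graph $G$ constructed in the previous subsection, so the entire complexity claim boils down to a careful count of the number of vertices $n$ and edges $m$ of $G$ in terms of the parameters $K$ and $\overline{T}$, followed by a direct substitution.

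First I would count the vertices. The source vertex $(\cM_K)$ contributes $1$, the vertices of the form $(\cM_i,j)$ with $i \in \{0,\ldots,K-1\}$ and $j \in \{1,\ldots,\overline{T}\}$ contribute $K\overline{T}$, and the self-loop vertices $(\cM_i,L)$ contribute $K$. Hence $n = O(K\overline{T})$. Next I would count the edges according to the four classes in Section~\ref{subsec:reduction}: the edges out of $(\cM_K)$ contribute $O(K\overline{T})$; the edges $(\cM_i,j) \to (\cM_k,\ell)$ with $i>k$ and $j \geq \ell$ contribute $O(K^2\overline{T}^2)$, which is the dominant term; the edges $(\cM_i,j) \to (\cM_i,L)$ contribute $O(K\overline{T})$; and the self-loops contribute $O(K)$. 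Summing gives $m = O(K^2\overline{T}^2)$.

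Finally I would substitute into the GSP bound: the algorithm of \citet{oldham2001combinatorial} runs in time $O(mn^2 \log n)$, so plugging in yields
\[
O\!\paren*{K^2\overline{T}^2 \cdot (K\overline{T})^2 \cdot \log(K\overline{T})} = O\!\paren*{K^4 \overline{T}^4 \log(K\overline{T})},
\]
matching the stated bound. I would also briefly note that constructing $G$ and computing its edge multipliers and costs (which require evaluating $\gamma$ and pairwise acceptance rates that are assumed to be precomputed or estimated offline) takes time proportional to $m$, and hence is absorbed by the GSP solve.

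The only subtle point — and the closest thing to an obstacle — is the edge count in the second class, since one must be careful that the constraint $j \geq \ell$ still yields $O(K^2\overline{T}^2)$ rather than something larger; this is immediate because the pair $(i,k)$ ranges over $O(K^2)$ values and the pair $(j,\ell)$ ranges over $O(\overline{T}^2)$ values. All other steps are a direct invocation of the reduction's correctness (Theorem~\ref{thm:reduction_correctness}) and Oldham's GSP complexity bound, so no further machinery is needed.
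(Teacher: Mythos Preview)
Your proposal is correct and follows essentially the same approach as the paper: count vertices ($n = O(K\overline{T})$) and edges ($m = O(K^2\overline{T}^2)$) in the reduction graph, then substitute into Oldham's $O(mn^2\log n)$ bound. Your vertex and edge counts are simply more explicit than the paper's, but the argument is identical.
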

\begin{proof}
    In the reduction from HSD to GSP, the number of vertices is $O(\overline{T}K)$ and the number of edges is $O(\overline{T}^2K^2)$. The time to create the GSP instance is thus $O(\overline{T}^2K^2)$. While GSP can be solved using a linear program, significant work~\citep{wayne1999generalized, charnes1966one,wayne1999faster, hochbaum1994simple} has been undertaken to reduce the running time. In particular,~\citet{oldham2001combinatorial} gives a strongly polynomial time algorithm: a GSP instance with $n$ vertices and $m$ edges can be solved in  $O(mn^2 \log n)$. Consequently, HSD can be solved in $O(\overline T^4K^4  \log (\overline{T} K))$. 
\end{proof}
\looseness=-1 This result is a significantly faster than searching over all possibilities, which is prohibitive.

\section{Empirical validation}
\label{sec:experiments}

\subsection{Datasets and Draft Models}

Our formalism applies to any set of candidate draft models sharing a vocabulary. For our evaluation, we focus on the case where draft models correspond to layers of the LLM, with a trained output head. We refer to these as early-exit models. Thus, the early-exit model for layer $i$ is the representation at layer $i$, passed through an output head that maps to a distribution over the output vocabulary. Therefore, for a transformer with $L$ layers, we have $L$ candidate draft models, from which we can build a hierarchy. 

We evaluate our method on datasets commonly used for evaluating speculative decoding: CNN-DM~\citep{hermann2015teaching} and XSUM~\citep{xsum-emnlp}. See Appendix~\ref{app:d} for more dataset details. We use two classes of models for evaluation.


{\bf LayerSkip}: The LayerSkip \citep{elhoushi2024layer} class of models have been trained with an early-exit objective. Thus, each of their layers can be used as a draft model. This is done by applying the LM head to any of the layers. We consider the 7B, 13B and 70B versions of these models, which have $32, 40,80$ layers respectively. We use the published checkpoints for each of these models. 

{\bf Gemma2-9B}: The Gemma2-9B model \citep{team2024gemma} has $42$ layers. However, Gemma2-9B was not trained to perform early-exiting like the LayerSkip models, so we undertake additional training; for every layer, we attach a language model head (a linear layer mapping from the embedding dimension to the vocabulary) and train it to match the output distribution of the final layer \citep{hinton2015distilling}. We use a learning rate of $2^{-4}$ with a 5\% linear warmup followed by cosine decay for two epochs. Only the LM head is trained and the backbone remains frozen. We train a variant of this model using the respective training sets for CNN-DM and XSUM, and further finetune the smallest model in the hierarchy to match the intermediate model. 

In both classes of models, we have a candidate draft model ${\cal M}_i$ for each layer $i$. In the Gemma setting, memory overhead grows linearly with the number of models included in the hierarchy. This is because one LM head is required to be stored in memory per model. In our experiments, the model and additional LM heads fit comfortably onto one GPU. Because the LayerSkip model uses the same LM head for each layer, the same overhead does not apply in the LayerSkip setting.


In contrast to standard autoregressive decoding, speculative decoding introduces an additional memory overhead. This is because the output distributions of all draft tokens must be stored. As with speculative decoding, HSD also incurs this overhead.  All experiments are performed using NVIDIA H100 GPUs.

\subsection{Optimization and Results}
In order to find the optimal hierarchy, we first require knowledge of the acceptance rates $\alpha_{i,j}$ for each pair of candidate models. We approximate the rates in an efficient manner by doing a pass over the dataset for a subset of prompts, recording the output distributions from all layers during each forward pass, and computing the empirical acceptance rate using the total variation distance of distributions. This takes about one hour with four GPUs. In fact, we believe this process can be further parallelized significantly, although it was not the focus of this paper. Simultaneously, we record the cost associated with each layer. We use these values to create our GSP instance, and run a GSP solver. 

The GSP solver identifies the optimal hierarchy given the acceptance rates and costs. We consider a sufficiently large maximum value for $T$ to be $15$. The GSP solver runs on a CPU and takes about two hours to run to completion.  We consider the following baselines. 

{\bf Single Draft}: Among all candidate models we take use the one that results in the minimal latency when used as a single draft model as in standard speculative decoding. We use the acceptance rates to identify the optimal two-layer setting as suggested in \citet{leviathan2023fast}.  This baseline checks if using a hierarchy is advantageous over a single draft. 

{\bf Target Model}: We sample autoregressively from the target model, without any speculative decoding in order to demonstrate the speedup with respect to generating directly from the target model.

There are of course many other potential baselines. However, since other methods such as \citet{ankner2024hydra,eagle} work on top of the standard two layer speculative decoding setting, we believe that the hierarchical setting should be extended to such methods in order to provide a valid baseline. Hence, we leave these methods for future work.  

We present our empirical results in Table~\ref{tab:results}. We report the average time per token as well as the speedup over autoregressive decoding, with batch size one. In all cases studied, HSD improves latency over both the single draft baseline as well as autoregressive generation. The speedup with respect to the single draft baseline is as much as 1.17× faster, showing the benefits of using HSD over standard speculative decoding. The results hold across various model sizes, as we experiment with models going from 7B parameters to 70B parameters. The greatest improvement can be seen on the LayerSkip class of models, showing that pretraining with an early-exit loss yields better draft models. 

The results in Table~\ref{tab:results} show that, by spending a few hours of compute once, one can obtain significantly faster inference from the target model than with standard speculative decoding. 

\section{Conclusion}
\label{sec:conclusion}
In this paper, we introduce an algorithm which extends speculative decoding to a more general  setting, involving multiple models of varying cost and accuracy. We show that, given the acceptance rates between models, the optimal hierarchy in Hierarchical Speculative Decoding (HSD) can be found efficiently via a polynomial-time algorithm. Empirically, we confirm that our theoretical insights hold in practice and yield an improvement upon standard speculative decoding.

Future work could explore how to integrate HSD with other speculative decoding techniques for the single-draft setting, such as \citep{gloeckle2024better, cai2024medusa, miao2024specinfer}. As an extension, it would be valuable to study how to adapt HSD to the online setting where the hierarchy is chosen as a function of the prompt. Additionally, while our focus is on language models, the HSD framework can be viewed more broadly as a form of rejection sampling, and may apply to other domains such as random walks with heavy-tailed transitions in graphs.

\newpage

\bibliography{bibliography}
\bibliographystyle{abbrvnat}

\clearpage
\appendix
\thispagestyle{empty}
\onecolumn
\section{Proofs}
\label{app:a}

\subsection{Proof of Theorem 3.1}
\begin{proof}
    We give a proof by induction. We show that for all $i \in [K]$, Algorithm \ref{alg:seq_generation} returns the output distribution of $\cM_i$.
    
    The base case in the context of this proof is when $idx = 1$, and $\cM_1$ verifies the draft tokens obtained from $\cM_0$. Due to the verification rule, all verified tokens in $out$ follow the distribution of $\cM_1$. Furthermore, by outputting the token probabilities obtained directly $\cM_1$, we also have the correct probabilities over next-tokens. 
    
    Suppose the inductive step holds for all values of $idx \leq k$. Next, we consider the case in which $idx = k+1$. Then, the function call to Algorithm \ref{alg:seq_generation} with $idx = k$ correctly returns output tokens and their distributions according to the true distribution of $\cM_k$. Hence, when $\cM_{k+1}$ performs verification of tokens and replaces the token probabilities with its own, we obtain an output token distribution according to that of $\cM_{k+1}$. 

    Hence, it follows that for $idx = K$, the output distribution over tokens is guaranteed to follow the distribution of $\cM_K$. 
\end{proof}

\subsection{Proof of Theorem 4.3}
\begin{proof}
The proof relies on showing a bijection between augmented paths in the GSP instance and hierarchies with parameters in the HSD instance. Recall that all optimal solutions to GSP are augmented paths.  Hence, upon solving GSP and decoding the solution into a path, we obtain a set of vertices along a simple path terminating at a lossy cycle. Define $P$ as the set of all paths in $G$ which start at $s$ and terminate at a lossy cycle. By construction, all lossy cycles have zero cost and the objective can be re-written as a minimization over paths that terminate at a loop vertex:
\begin{align*}
    \min \sum_{e \in E} f(e)c(e) &= \min_{p = (e_1, \ldots, e_{\abs{p}}) \in P} \sum_{i = 1}^{\abs{p}} f(e_i)c(e_i) \\
    &= \min_{p = (e_1, \ldots, e_{\abs{p}}) \in P} \sum_{i = 1}^{\abs{p}} \paren*{\prod_{j = 1}^{i-1} \mu(e_j) }c(e_i).
\end{align*}
Any fixed augmented path $p$ in the graph is of the following form  ($\ell \geq 0$):
\[p = (\cM_K) \rightarrow (\cM_{p_1}, t_{p_1}) \rightarrow \cdots \rightarrow(\cM_{p_\ell}, t_{p_\ell}) \rightarrow (\cM_{p_{\ell+1}}, t_{p_{\ell+1}}) \rightarrow (\cM_{p_{\ell+1}}, L) \circlearrowleft.
\]
The corresponding set of models in HSD is $\cM_{p_{\ell+1}}, \cM_{p_{\ell}}, \ldots, \cM_{p_1}, \cM_K$ and the corresponding $T$ parameters are $j_{p_{\ell+1}}, j_{p_\ell}, \ldots, j_{p_1}$. Denote $\sigma = \{p_{\ell+1}, p_\ell, \ldots, p_1, K\}$. 
Substituting in the values from $\mu$ and $c$, the cost of this path in the GSP instance is:
\begin{align*}
& \frac{(1 - \alpha_{p_1, K})}{(1 - \alpha_{p_1, K}^{t_{p_1}})} c_K + \frac{(1 - \alpha_{p_1, K})}{(1 - \alpha_{p_1, K}^{t_{p_1}})} \gamma(\alpha_{p_2, p_1}, t_{p_2}, t_{p_1}) c_{p_1} + \frac{(1 - \alpha_{p_1, K})}{(1 - \alpha_{p_1, K}^{t_{p_1}})} \gamma(\alpha_{p_2, p_1}, t_{p_2}, t_{p_1})\gamma(\alpha_{p_3, p_2}, t_{p_3}, t_{p_2})c_{p_2} +   \cdots   \\
&= \sum_{i = 0}^{\abs{\sigma}} c_{\sigma[i]} \prod_{j = i}^{\abs{\sigma}} 
 R_{\sigma, T}(\alpha_{\sigma[j-1], \sigma[j]}, j) = L(\sigma, T).
\end{align*}
Hence, the cost of a path in the GSP reduction is equal to the latency of the hierarchy which it specifies. By construction, every possible hierarchy is encoded as an augmented path in the GSP reduction; this is because any model can terminate the augmented path due to its designated lossy cycle vertex. Furthermore, every augmented path in the graph corresponds to exactly one subset $\sigma$ and $T$ parameters. Thus, there exists a bijection between augmented paths in the GSP instance and hierarchies in the HSD instance. Because the cost of a path in GSP exactly corresponds to the latency of that hierarchy, a path is a solution to the GSP instance if and only if the corresponding hierarchy is a solution to HSD.
\end{proof}
\newpage
\section{Sections 2 and 3 Details}
\label{app:b}

\subsection{Verification algorithm}
First, we provide the algorithm description for verification.

\begin{algorithm}[ht]
\caption{Token Verification and Correction}
\label{alg:verification_revised}
\begin{algorithmic}[1]
\Procedure{Verify}{$\text{idx}, \text{draft\_tokens}, \text{draft\_probs}, \text{context}$}
    \State $t \gets \Call{Len}{\text{draft\_tokens}}$
    \State Let $\text{draft\_tokens} = (x_1, \ldots, x_t)$ and $\text{draft\_probs} = (q_1, \ldots, q_t)$
    \Comment{Run verifier $\mathcal{M}_{\text{idx}}$ in parallel on all prefixes to get true distributions}
    \State $p_1, \ldots, p_{t+1} \gets \mathcal{M}_{\text{idx}}(\text{context}), \ldots, \mathcal{M}_{\text{idx}}(\text{context} + x_1\ldots x_t)$
    \State $n \gets t$ \Comment{Initialize number of accepted tokens to the maximum}
    \For{$i = 1 \to t$}
        \State Sample $r \sim U(0, 1)$
        \If{$r > \frac{p_i(x_i)}{q_i(x_i)}$} \Comment{Rejection sampling condition}
            \State $n \gets i - 1$ \Comment{The first $n$ tokens are accepted}
            \State \textbf{break} \Comment{Exit the loop}
        \EndIf
    \EndFor
    
    \State $\text{accepted\_tokens} \gets (x_1, \ldots, x_n)$
    \State $\text{final\_dist} \gets p_{n+1}$ \Comment{Get distribution for the token after the accepted sequence}
    
    \If{$n < t$} \Comment{If a token was rejected, modify the distribution}
        \State $\text{final\_dist}(x) \gets \Call{Normalize}{\max\{0, p_{n+1}(x) - q_{n+1}(x)\}}$ for all $x$
    \EndIf
    
    \State Sample $m \sim \text{final\_dist}$ \Comment{Sample a corrected token from the final distribution}
    \State $\text{output\_tokens} \gets \text{accepted\_tokens} + [m]$
    \State $\text{output\_probs} \gets (p_1, \ldots, p_n, p_{n+1})$
    
    \State \textbf{return} $\text{output\_tokens}, \text{output\_probs}$
\EndProcedure
\end{algorithmic}
\end{algorithm}

This verification algorithm is exactly the same as that proposed in \citep{leviathan2023fast}.

\subsection{Examining the assumptions of HSD}
We conduct an ablation study to evaluate the impact of two simplifying assumptions made in our theoretical analysis: (1) that acceptance rates are IID, and (2) that generation and verification costs remain constant throughout inference. We use a four-layer hierarchy with Gemma2 9B to introduce more variability than the settings in our main results. 

In order to assess the the validity of the first assumption, we simulate IID acceptance rates. To that end, we replace the verification rule in Algorithm~\ref{alg:verification_revised} with a biased coin toss for each token. The probability of acceptance is set to the empirical average rate. In order to assess the validity of the second assumption, we simulate a constant cost. We substitute the measured wall-clock time at each step with a fixed, artificial cost, and the total latency is the sum of these costs. Thus, we measure latency across four settings and report the results in Table~\ref{tab:latency_comparison}.

\begin{description}
    \item[Real Acceptance / Real Cost] The standard setting, which uses the true acceptances from Algorithm~\ref{alg:verification_revised} and measures actual wall-clock time.
    
    \item[IID Acceptance / Real Cost] We use simulated acceptances but measure actual wall-clock time.
    \item[Real Acceptance / Artificial Cost] We use the true acceptances but measure a fixed, artificial cost per step.
    \item[IID Acceptance / Artificial Cost] This represents the fully simplified model, using both simulated acceptances and fixed costs.
\end{description}

\begin{table}[htbp]
\centering
\begin{tabular}{llr}
\toprule
\textbf{Acceptance Rate} & \textbf{Cost Type} & \textbf{Latency} \\
\midrule
Real & Real & 0.0438226 \\
IID & Real & 0.0437504 \\
Real & Artificial & 0.0439264 \\
IID & Artificial & 0.0438760 \\
\bottomrule
\end{tabular}
\caption{Comparison of latency under different conditions.}
\label{tab:latency_comparison}
\end{table}

As shown, there is very little variability between all of these settings. We ran many  experiments of this nature in order to both validate our assumptions and also verify that our algorithm was indeed running correctly. Hence, we conclude that the assumptions made by our theoretical work are not too strong to capture the empirical aspects.

\subsection{HSD Example}
We expand further on the example provided in Table \ref{tab:expected_latency_comparison}. This example was constructed manually. We constructed this example in order to convey a setting in which adding more models improves the latency of HSD. In our example, we add one more model at a time by adding a new smallest model. Then, we solve for the optimal hierarchy. In our example, every time a new model is added as an option, it is optimal to use it in HSD.

We note the acceptance rate matrix must follow a certain structure. This is because acceptance rates are obtained via the TV distance of distributions, a distance metric that respects the triangle inequality. This implies that for any $i \neq j \neq k$, the following must hold:
\[
\alpha_{i, j} + \alpha_{j, k} \leq \alpha_{i, k} + 1.
\]
We create an acceptance rate matrix as shown in Table~\ref{tab:clean_matrix_2}, where the acceptance rate from $\cM_i$ to model $\cM_j$ is in the $i$'th row and $j$'th column.
\begin{table}[htbp]
\centering
\begin{tabular}{c S[table-format=1.3]
                S[table-format=1.3]
                S[table-format=1.3]
                S[table-format=1.3]
                S[table-format=1.3]}
\toprule
 & {\textbf{2}} & {\textbf{3}} & {\textbf{4}} & {\textbf{5}} & {\textbf{6}} \\
\midrule
\textbf{1} & 0.750 & 0.500 & 0.250 & 0.000 & 0.000 \\
\textbf{2} & {--}  & 0.750 & 0.500 & 0.250 & 0.050 \\
\textbf{3} & {--}  & {--}  & 0.750 & 0.500 & 0.300 \\
\textbf{4} & {--}  & {--}  & {--}  & 0.750 & 0.550 \\
\textbf{5} & {--}  & {--}  & {--}  & {--}  & 0.800 \\
\bottomrule
\end{tabular}
\caption{First example acceptance rate matrix.}
\label{tab:clean_matrix_2}
\end{table}

We use the following costs: 
$c_1 = 0.00001, c_2 = 0.003, c_3 = 0.01, c_4 = 0.25, c_5 = 4, c_6 = 33$. While increasing the number of available models, we run the GSP solver to identify the optimal hierarchy to provide to HSD, and compute the expected latency.

We can instantiate many other such examples simply by changing the costs and acceptance rates. Suppose we let the costs be $c_1 = 0.00005,c_2= 0.0002,c_3= 0.05,c_4= 2.0, c_5 =8.0, c_6=33.0$ and let  the acceptance rate matrix be as in Table~\ref{tab:clean_matrix}.
\begin{table}[htbp]
\centering
\begin{tabular}{c S[table-format=1.3]
                S[table-format=1.3]
                S[table-format=1.3]
                S[table-format=1.3]
                S[table-format=1.3]}
\toprule
 & {\textbf{2}} & {\textbf{3}} & {\textbf{4}} & {\textbf{5}} & {\textbf{6}} \\
\midrule
\textbf{1} & 0.525 & 0.125 & 0.000 & 0.000 & 0.000 \\
\textbf{2} & {--}  & 0.600 & 0.275 & 0.025 & 0.000 \\
\textbf{3} & {--}  & {--}  & 0.675 & 0.425 & 0.225 \\
\textbf{4} & {--}  & {--}  & {--}  & 0.750 & 0.550 \\
\textbf{5} & {--}  & {--}  & {--}  & {--}  & 0.800 \\
\bottomrule
\end{tabular}
\caption{Second example acceptance rate matrix.}
\label{tab:clean_matrix}
\end{table}
Then, adding more models yields speedup as shown in Table~\ref{tab:expected_latency_comparison2}.
\begin{table}[htbp]
\centering
\begin{tabular}{ccc}
\toprule
\textbf{\makecell{Number of\\Models}}&\textbf{\makecell{Expected\\Speedup}}&\textbf{\makecell{Expected\\Latency}}\\
\midrule
1&1.0000×&33.00\\
2&1.7090×&19.31\\
3&2.1366×&15.45\\
4&2.2587×&14.61\\
5&2.2817×&14.46\\
6&2.2910×&14.40\\
\bottomrule
\end{tabular}
\caption{A second example of the expected speedup as the number of models provided to HSD increases.}
\label{tab:expected_latency_comparison2}
\end{table}
\subsection{Analysis of optimal HSD configurations}
Due to the introduction of a new optimization problem for each hierarchy, it is difficult to straightforwardly quantify when introducing a new model would lower the latency. However, we conduct an experiment to explore this. In the experiment, we fix a target model A with cost 1024, and a draft model B with cost 256. We fix the acceptance rate between the two to be 50\%. Then, we introduce a third model, C, where we vary both its cost and its acceptance rate to B to identify settings in which it is optimal to use the hierarchy A-B-C. We use a lower bound to determine the acceptance rate from C to A. For each configuration, we run our optimization algorithm to identify the optimal hierarchy to minimize latency.  

We present our findings in Table~\ref{tab:speedup_hierarchy_right}. For each choice of cost for model C and acceptance rate from model C to B, we solve for the optimal latency. We color-code the cell based on which hierarchy achieves this latency.

As we can see, as the cost of C increases, it is less appealing to use it unless it also has a strong acceptance rate to B. When C has both a low cost and high acceptance rate to A, it eventually becomes optimal only to use model C. In between these scenarios, we see numerous instances in which the complete hierarchy A-B-C is the optimal one to use.

\begin{table}[ht]
\centering
\begin{tabular}{@{}rllllllll@{}}
\toprule
\textbf{Acceptance Rate} & \multicolumn{8}{c}{\textbf{Cost of Model C}} \\
\cmidrule(l){2-9}
\textbf{(C to B)} & 1.0 & 2.0 & 4.0 & 8.0 & 16.0 & 32.0 & 64.0 & 128.0 \\
\midrule
0.0 & \cellcolor{colorABOptimal}1.20 & \cellcolor{colorABOptimal}1.20 & \cellcolor{colorABOptimal}1.20 & \cellcolor{colorABOptimal}1.20 & \cellcolor{colorABOptimal}1.20 & \cellcolor{colorABOptimal}1.20 & \cellcolor{colorABOptimal}1.20 & \cellcolor{colorABOptimal}1.20 \\
0.1 & \cellcolor{colorABOptimal}1.20 & \cellcolor{colorABOptimal}1.20 & \cellcolor{colorABOptimal}1.20 & \cellcolor{colorABOptimal}1.20 & \cellcolor{colorABOptimal}1.20 & \cellcolor{colorABOptimal}1.20 & \cellcolor{colorABOptimal}1.20 & \cellcolor{colorABOptimal}1.20 \\
0.2 & \cellcolor{colorABCOptimal}1.21 & \cellcolor{colorABCOptimal}1.21 & \cellcolor{colorABCOptimal}1.20 & \cellcolor{colorABOptimal}1.20 & \cellcolor{colorABOptimal}1.20 & \cellcolor{colorABOptimal}1.20 & \cellcolor{colorABOptimal}1.20 & \cellcolor{colorABOptimal}1.20 \\
0.3 & \cellcolor{colorABCOptimal}1.23 & \cellcolor{colorABCOptimal}1.23 & \cellcolor{colorABCOptimal}1.22 & \cellcolor{colorABCOptimal}1.22 & \cellcolor{colorABCOptimal}1.21 & \cellcolor{colorABOptimal}1.20 & \cellcolor{colorABOptimal}1.20 & \cellcolor{colorABOptimal}1.20 \\
0.4 & \cellcolor{colorABCOptimal}1.25 & \cellcolor{colorABCOptimal}1.25 & \cellcolor{colorABCOptimal}1.24 & \cellcolor{colorABCOptimal}1.24 & \cellcolor{colorABCOptimal}1.23 & \cellcolor{colorABCOptimal}1.21 & \cellcolor{colorABOptimal}1.20 & \cellcolor{colorABOptimal}1.20 \\
0.5 & \cellcolor{colorABCOptimal}1.27 & \cellcolor{colorABCOptimal}1.27 & \cellcolor{colorABCOptimal}1.27 & \cellcolor{colorABCOptimal}1.26 & \cellcolor{colorABCOptimal}1.25 & \cellcolor{colorABCOptimal}1.23 & \cellcolor{colorABOptimal}1.20 & \cellcolor{colorABOptimal}1.20 \\
0.6 & \cellcolor{colorABCOptimal}1.30 & \cellcolor{colorABCOptimal}1.29 & \cellcolor{colorABCOptimal}1.29 & \cellcolor{colorABCOptimal}1.28 & \cellcolor{colorABCOptimal}1.27 & \cellcolor{colorABCOptimal}1.25 & \cellcolor{colorABCOptimal}1.22 & \cellcolor{colorABOptimal}1.20 \\
0.7 & \cellcolor{colorABCOptimal}1.34 & \cellcolor{colorABCOptimal}1.33 & \cellcolor{colorABCOptimal}1.33 & \cellcolor{colorABCOptimal}1.32 & \cellcolor{colorABCOptimal}1.30 & \cellcolor{colorABCOptimal}1.28 & \cellcolor{colorABCOptimal}1.24 & \cellcolor{colorABOptimal}1.20 \\
0.8 & \cellcolor{colorACOptimal}1.42 & \cellcolor{colorACOptimal}1.41 & \cellcolor{colorACOptimal}1.40 & \cellcolor{colorACOptimal}1.38 & \cellcolor{colorACOptimal}1.35 & \cellcolor{colorACOptimal}1.31 & \cellcolor{colorABCOptimal}1.27 & \cellcolor{colorABCOptimal}1.21 \\
0.9 & \cellcolor{colorACOptimal}1.65 & \cellcolor{colorACOptimal}1.64 & \cellcolor{colorACOptimal}1.63 & \cellcolor{colorACOptimal}1.60 & \cellcolor{colorACOptimal}1.55 & \cellcolor{colorACOptimal}1.48 & \cellcolor{colorACOptimal}1.39 & \cellcolor{colorACOptimal}1.25 \\
\bottomrule
\end{tabular}
\vspace{0.5em} 

\small{%
{\color{colorABCOptimal}\rule{0.8em}{0.8em}}~
 Hierarchy A-B-C is optimal. \quad
{\color{colorABOptimal}\rule{0.8em}{0.8em}}~%
 Hierarchy A-B is optimal. \quad
{\color{colorACOptimal}\rule{0.8em}{0.8em}}~%
 Hierarchy A-C is optimal.
}

\caption{Speedup from optimal hierarchy across various parameters.}
\label{tab:speedup_hierarchy_right}
\end{table}

\newpage
\section{Reduction from HSD to GSP}
\label{app:c}
In the following example, we draw the graph of the reduction for when $K = 3$ and $\bar{T} = 3$. The source vertex is $(\cM_3)$ and the functions $\mu$ and $c$ are as defined in Section 4. By finding the cheapest flow-conserving path which takes one unit of flow out of the source vertex, we also find the optimal speculative decoding hierarchy.
\begin{center}
    
\begin{tikzpicture}
\usetikzlibrary{positioning, calc, backgrounds, arrows.meta} 

\definecolor{rootColor}{HTML}{0D47A1}  
\definecolor{baseColor}{HTML}{D84315}  
\definecolor{toLColor}{HTML}{00695C}   
\definecolor{loopColor}{HTML}{546E7A}  
\definecolor{nodeFill}{HTML}{E3F2FD}   
\definecolor{nodeBorder}{HTML}{546E7A} 

\def\xsep{5.0}   
\def\ysep{2.8}   
\def\yGap{4.5}   

\tikzset{
  nodeStyle/.style = {
    draw=nodeBorder, circle, fill=nodeFill,
    minimum size=1.2cm, inner sep=0pt,
    thick
  },
  edgeStyle/.style = {->, opacity=.8, thick},
  rootEdge/.style  = {edgeStyle, rootColor, very thick},
  baseEdge/.style  = {edgeStyle, baseColor},
  toLEdge/.style   = {edgeStyle, toLColor},
  loopEdge/.style  = {edgeStyle, loopColor},
}

\node[nodeStyle] (M3) at ({1*\xsep}, 0)
  {$\mathsf{(\mathcal{M}_3)}$};

\foreach \i in {2,1,0}{
  \foreach \j in {1,2,3}{
    \node[nodeStyle] (M\i-\j)
      at ({(\j-1)*\xsep}, { -2.5 - (2-\i)*\ysep })
      {$\mathsf{(\mathcal{M}_{\i},\,\j)}$};
  }
}

\foreach \i in {2,1,0}{
  \node[nodeStyle] (M\i-L)
    at ({(2-\i)*\xsep}, { -2.5 - 2*\ysep - \yGap })
    {$\mathsf{(\mathcal{M}_{\i},\,L)}$};
}

\begin{scope}[on background layer]
    \foreach \i in {2,1,0}{
      \foreach \j / \out / \inn in {1/-135/45, 2/-90/90, 3/-45/135}{
        \draw[rootEdge, looseness=1.2]
          (M3) to[out=\out, in=\inn] (M\i-\j);
      }
    }

    \foreach \i/\iminus in {2/1,1/0}
      \foreach \k in {0,...,\iminus}
        \foreach \j in {1,2,3}
          \foreach \l in {1,...,\j}{
            \draw[baseEdge] (M\i-\j) to[bend left=15] (M\k-\l);
    }

    \foreach \i in {2,1,0}{
      \foreach \j in {1,2,3}{
        \draw[toLEdge] (M\i-\j) to[out=-90, in=90, looseness=0.6] (M\i-L);
      }
    }

    \foreach \i in {2,1,0}{
      \draw[loopEdge, looseness=6, out=-45, in=-135] (M\i-L) to (M\i-L);
    }
\end{scope}

\end{tikzpicture}
\end{center}

\newpage
\section{Dataset Details}
\label{app:d}

We provide details for the XSUM and CNN-DM datasets. 

\begin{table}[ht]
  \centering
  \small
  \begin{tabular}{@{}lll@{}}
    \toprule
    \textbf{Dataset} & \textbf{Domain \& Source} & \textbf{Split Sizes} \\
    & & \textit{train / val / test} \\
    \midrule
    XSum & BBC News articles & 204{,}045 / 11{,}332 / 11{,}334 \\
    CNN/DailyMail & CNN \& Daily Mail news stories & 287{,}226 / 13{,}368 / 11{,}490 \\
    \bottomrule
  \end{tabular}
  \vspace{3pt}
  \caption{Dataset details for XSUM and CNN-DM.}
\end{table}

\end{document}